\documentclass[letterpaper]{article} % DO NOT CHANGE THIS
\usepackage{aaai2027} % DO NOT CHANGE THIS

\usepackage[hyphens]{url} % DO NOT CHANGE THIS
\usepackage{graphicx} % DO NOT CHANGE THIS
\usepackage{natbib} % DO NOT CHANGE THIS OR ADD OPTIONS
\usepackage{caption} % DO NOT CHANGE THIS OR ADD OPTIONS
\usepackage{amsmath}
\usepackage{amssymb}
\usepackage{amsthm}

\newcommand{\E}{\mathbb{E}}
\newcommand{\Prob}{\mathbb{P}}

\newcommand{\given}{\,\vert\,}

\newcommand{\bracks}[1]{\left[#1\right]}
\newcommand{\set}[1]{\left\{#1\right\}}
\newcommand{\abs}[1]{\left\lvert#1\right\rvert}

\newcommand{\indicator}[1]{\mathbf{1}\!\set{#1}}

\providecommand{\argmin}{\operatorname*{arg\,min}}

\newcommand{\Ent}{\mathrm{H}}
\newcommand{\MI}{\mathrm{I}}
\newcommand{\KL}{\mathrm{D}_{\mathrm{KL}}}
\newcommand{\IG}{\operatorname{IG}}

\newcommand{\World}{\Theta}
\newcommand{\Target}{Z}
\newcommand{\Data}{\mathcal{D}}
\newcommand{\History}{\mathcal{H}}
\newcommand{\CompSet}{\mathcal{C}}
\newcommand{\DecisionSet}{\mathcal{A}}

\newcommand{\decision}{a}
\newcommand{\metapolicy}{\mu}

\newcommand{\Loss}{L}
\newcommand{\Risk}{R}
\newcommand{\CompCost}{\tau}
\newcommand{\Budget}{B}
\newcommand{\VOC}{\operatorname{VOC}}

\newcommand{\Astar}{\textsc{A*}}

\newcommand{\GBFS}{\textsc{gbfs}}
\newcommand{\MCTS}{\textsc{mcts}}
\newcommand{\UCT}{\textsc{uct}}

\newcommand{\UCB}{\textsc{ucb}}

\newtheorem{definition}{Definition}
\newtheorem{theorem}{Theorem}
\newtheorem{proposition}{Proposition}

\newcommand{\secref}[1]{Section~\ref{#1}}

\newcommand{\eqnref}[1]{Equation~\eqref{#1}}
\newcommand{\defref}[1]{Definition~\ref{#1}}
\newcommand{\propref}[1]{Proposition~\ref{#1}}
\newcommand{\thmref}[1]{Theorem~\ref{#1}}

\title{Search as Computation Allocation}

\author{Alexander Tuisov}
\affiliations{Technion - Israel Institute of Technology}

\begin{document}

\maketitle

\begin{abstract}
Many algorithms spend an internal resource before returning a decision and
are evaluated only by the quality of that terminal output.  We formalize
such procedures as terminal computation-allocation problems: costly
computations produce observations, update beliefs about a latent
environment, and matter only through terminal decision loss.  Bellman
equations characterize optimal allocation under fixed budgets, priced
computation, and exact certification.  We then relate value of computation
(VOC) to information.  Mutual information equals myopic VOC under log loss,
whereas under simple regret VOC is a knowledge-gradient quantity; moreover,
information gain can rank computations arbitrarily poorly, although it gives
a one-sided upper bound on VOC.  Bandit pulls, tree simulations, and node
expansions illustrate the same model under different computation topologies.
Finally, under an explicit frontier-resolution and heuristic-error model,
maximizing approximate VOC recovers weighted A*, with A* and greedy
best-first search as limiting cases.  The theory identifies a shared
decision problem without asserting that one acquisition rule is universally
optimal.
\end{abstract}

% Optional anonymous links to code, data, or an extended version belong here,
% between the abstract and the main body.
%
% \begin{links}
%   \link{Code}{https://example.com/anonymous-code}
% \end{links}

\section{Introduction}
\label{sec:introduction}

Many algorithms spend an internal resource before returning a final
decision.  The resource may fund observations, simulations, expansions,
evaluations, or other internal operations.  When those computations have no
direct object-level utility, the central question is always the same:
\emph{which computation should be performed next to improve the decision
made when computation stops?}

This perspective matters because familiar exploration language can obscure
the actual objective.  A rollout does not earn object-level reward merely
because the bandit rule used inside \MCTS{} assigns it a reward.  A node
expansion is not valuable merely because it is novel or uncertain.
Information about an optimal decision can likewise be useless when it cannot
change that decision or when the associated value gap is negligible.
Computation is instrumentally valuable through its effect on terminal loss.

Rational metareasoning already treats internal computations as actions and
selects them by their expected value
\cite{russell1991right,russell1991principles,hay2012selecting}.  We specialize
that idea to \emph{terminally evaluated} procedures and use the specialization
to give a domain-independent model of internal resource allocation.  The
scope restriction is substantive.  It includes any finite procedure whose
internal actions acquire or process information, incur accounted costs, and
affect utility only through a terminal output.  It excludes settings in which
an intermediate action itself changes object-level utility or the external
world, such as cumulative-regret bandits or deliberation interleaved with
execution.

The unification occurs at the level of the metalevel decision problem, not
at the level of a universal heuristic.  Computations may be repeatable or
one-shot, may update one belief or many, and may reveal new available
computations.  These different observation kernels and computation
topologies lead to different practical policies even when the terminal
objective is shared.

\paragraph{Contributions.}
First, we define a terminal computation-allocation problem that separates the
latent environment, the decision-relevant target, the terminal loss, and the
available computations.  We give its explicit metalevel decision process
with a stop action, and Bellman equations characterize optimal allocation for
fixed-budget, cost-sensitive, and certificate-seeking objectives.

Second, we distinguish decision value from information.  We show that
conditional mutual information is exactly myopic value of computation under
logarithmic loss.  Under simple regret, the corresponding quantity is instead
the knowledge gradient.  We construct problems in which maximizing
information gain chooses a computation with an arbitrarily small fraction of
the best available VOC, and prove a complementary one-sided bound showing
that low information limits VOC for bounded target-dependent losses.

Third, we use pure-exploration bandits, \MCTS, and heuristic graph search as
contrasting instantiations, making explicit how their computation topologies
differ.  Finally, we give a concrete derivation of weighted \Astar.  Under a
frontier-resolution approximation and a common location model for heuristic
error, maximizing expected incumbent improvement orders nodes by $g+w h$.
The result recovers \Astar{} at $w=1$ and \GBFS{} as $w\rightarrow\infty$,
while clearly separating this bounded-budget derivation from the certificate
guarantees of exact \Astar.

The paper therefore advances a limited but testable thesis: terminally
evaluated search procedures are policies for the same kind of computation
allocation problem, and their familiar priority rules can be studied by
stating which approximations to terminal decision value they make.

\section{Preliminaries}
\label{sec:preliminaries}

% Target length: approximately 1--1.25 pages.
%
% Purpose:
% 1. Separate the latent environment from the decision-relevant target.
% 2. Define Bayes risk and simple regret.
% 3. Define target-directed mutual information.
%
% Do not define value of computation here. The next section should introduce
% computations, observation kernels, metalevel policies, and VOC as the
% paper's unifying framework.

\subsection{Bayesian Decision Problems}

\paragraph{Latent environment and terminal decision.}
Let $\World$ denote a latent environment drawn from a prior
$p(\theta)$.  After observing data $\Data$, an algorithm must return a
terminal decision $\decision\in\DecisionSet$ and incurs loss
$\Loss(\decision,\World)$.
The posterior-optimal terminal decision is
\begin{equation}
  \decision^\star(\Data)
  \in
  \argmin_{\decision\in\DecisionSet}
  \E\bracks{\Loss(\decision,\World)\given\Data},
  \label{eq:bayes-decision}
\end{equation}
and the corresponding Bayes risk is
\begin{equation}
  \Risk(\Data)
  =
  \min_{\decision\in\DecisionSet}
  \E\bracks{\Loss(\decision,\World)\given\Data}.
  \label{eq:bayes-risk}
\end{equation}

\paragraph{Simple regret.}
When $\World$ determines a value function $V_\World(\decision)$, the loss
associated with selecting $\decision$ can be taken to be simple regret,
\begin{equation}
  \Loss_{\mathrm{SR}}(\decision,\World)
  =
  \max_{\decision'\in\DecisionSet}V_\World(\decision')
  -
  V_\World(\decision).
  \label{eq:simple-regret}
\end{equation}
This loss measures only the quality of the final recommendation, rather than
the rewards or costs accumulated while computing it.

\paragraph{Decision-relevant target.}
Let
\begin{equation}
  \Target = \zeta(\World)
  \label{eq:decision-target}
\end{equation}
be the aspect of the environment relevant to the final decision: for example,
the identity of an optimizer, its value, or a feature sufficient for the
terminal choice.  We distinguish $\World$ from $\Target$ because the identity
of an optimizer alone generally does not determine the magnitude of regret.
Loss is therefore defined on $\World$, whereas information gain may target
the lower-dimensional variable $\Target$.

\subsection{Decision-Directed Information}

\paragraph{Conditional mutual information.}
For a prospective observation $Y$, its information about the target
$\Target$, conditioned on current data, is
\begin{equation}
  \MI(\Target;Y\given\Data)
  =
  \E_{Y\mid\Data}
  \bracks{
    \KL\!\left(
      p(\Target\mid\Data,Y)
      \,\middle\|\,
      p(\Target\mid\Data)
    \right)
  }.
  \label{eq:conditional-mi}
\end{equation}
For discrete $\Target$, this is equivalently the expected reduction in
posterior entropy,
\begin{equation}
  \MI(\Target;Y\given\Data)
  =
  \Ent(\Target\given\Data)
  -
  \E_{Y\mid\Data}
  \bracks{\Ent(\Target\given\Data,Y)}.
  \label{eq:entropy-reduction}
\end{equation}
The KL form in \eqnref{eq:conditional-mi} also applies when entropy itself is
not well behaved.

\paragraph{Information is target dependent.}
Information about the complete environment need not improve the terminal
decision.  We therefore measure information about $\Target=\zeta(\World)$
rather than indiscriminately about $\World$.  Even this targeted information
need not be proportional to regret reduction: resolving a low-probability
tie may reveal substantial information while having negligible decision
value.  This distinction motivates the value-of-computation objective
introduced in the next section.

\paragraph{Standing conventions.}
We assume that posterior expectations and conditional mutual information
exist, that ties among optimal decisions are resolved by a fixed rule, and
that the terminal decision set is finite unless stated otherwise.  The
Bayesian model represents uncertainty available to the resource-bounded
algorithm; it need not imply that the underlying problem is stochastic.

\section{Terminally Evaluated Computation Allocation}
\label{sec:computation-selection}

The objects introduced in \secref{sec:preliminaries} specify what an
algorithm must ultimately decide, but not how it may spend computation before
making that decision.  We now model this internal process.  The defining
restriction is that intermediate computations have no object-level value:
they matter only through their effect on the terminal decision and through
the resources they consume.
This metalevel view follows the rational-metareasoning tradition
\cite{russell1991right,hay2012selecting}; our purpose is to isolate the
terminally evaluated subclass independently of any particular algorithm
family.

\subsection{Computations and Histories}

At time $t$, the computation history is
\begin{equation}
  \History_t
  =
  \bigl((c_1,y_1),\ldots,(c_t,y_t)\bigr),
  \qquad
  \History_0=\varnothing.
  \label{eq:computation-history}
\end{equation}
Given a history $\History$, the algorithm may perform any computation
$c\in\CompSet(\History)$.  The computation has cost
$\CompCost(c,\History)>0$ and produces an observation $Y_c$ according to a
kernel
\begin{equation}
  p(y\mid c,\World,\History).
  \label{eq:observation-kernel}
\end{equation}
Both availability and cost may depend on the history.  Thus, expanding a
node may reveal previously unavailable computations, and two computations
need not have independent outcomes.  We write
$\History\oplus(c,y)$ for the history obtained by appending $(c,y)$.

A computation policy $\metapolicy$ maps each history to a distribution over
the available computations and a distinguished stop action.  On stopping,
a terminal rule $\delta$ returns
\begin{equation}
  \delta(\History)\in\DecisionSet.
\end{equation}
The policy may therefore adapt both what to compute and when to stop.

\begin{definition}[Terminal computation-allocation problem]
\label{def:tcap}
A terminal computation-allocation problem consists of a latent environment
$\World$, terminal decisions $\DecisionSet$, loss $\Loss$, available
computations $\CompSet(\History)$, observation kernels as in
\eqnref{eq:observation-kernel}, and computation costs $\CompCost$.  Internal
actions may change beliefs and which later computations are available, but
they affect evaluation only through the terminal loss and accumulated
computation cost.
\end{definition}

The last condition is the substantive one.  It includes any finite decision
procedure that spends an internal resource acquiring or processing
information before producing an output.  It excludes procedures in which an
internal action itself changes object-level utility or the external world.
For example, an observed payoff may serve only as information in pure
exploration, but it is itself utility in a cumulative-regret objective.

\subsection{Terminal Objectives}

Let
\begin{equation}
  K_t
  =
  \sum_{i=1}^{t}
  \CompCost(c_i,\History_{i-1})
  \label{eq:accumulated-computation-cost}
\end{equation}
denote accumulated computation cost.

\paragraph{Fixed budget.}
For budget $\Budget$, a policy must stop before its accumulated cost exceeds
$\Budget$.  Its objective is
\begin{equation}
  J_{\Budget}(\metapolicy,\delta)
  =
  \E\bracks{
    \Loss\bigl(\delta(\History_T),\World\bigr)
  },
  \qquad K_T\leq\Budget.
  \label{eq:fixed-budget-objective}
\end{equation}
This is the natural form for best-arm identification and for MCTS with a
fixed simulation budget.  When $\Loss$ is the loss in
\eqnref{eq:simple-regret}, \eqnref{eq:fixed-budget-objective} is expected
simple regret.  Thus, choosing this objective and taking computations to be
arm samples produces a pure-exploration bandit problem; taking them to be
tree simulations produces the usual terminally evaluated \MCTS{} problem.

\paragraph{Cost-sensitive stopping.}
When computation and decision quality share a common utility scale, a price
$\lambda>0$ yields
\begin{equation}
  J_{\lambda}(\metapolicy,\delta)
  =
  \E\bracks{
    \Loss\bigl(\delta(\History_T),\World\bigr)
    +
    \lambda K_T
  }.
  \label{eq:cost-sensitive-objective}
\end{equation}
The stopping time $T$ is selected by the policy.  This formulation trades
terminal quality directly against deliberation cost.

\paragraph{Exact or certificate-seeking search.}
In exact search, terminal loss is constrained rather than traded against
runtime:
\begin{equation}
  \begin{aligned}
    \min_{\metapolicy,\delta}\quad & \E[K_T] \\
    \text{subject to}\quad &
    \Loss\bigl(\delta(\History_T),\theta\bigr)=0,
    \quad \forall\theta\in\Omega(\History_T),
  \end{aligned}
  \label{eq:certificate-objective}
\end{equation}
where $\Omega(\History)$ is the set of environments consistent with a
history.
The terminal decision may include both a solution and a certificate of its
optimality.  This formulation captures algorithms such as A*, whose goal is
not to reduce nonzero simple regret at termination, but to reach a sound
zero-loss stopping condition using as little computation as possible
\cite{hart1968astar,dechter1985astar}.

\subsection{The Induced Metalevel Decision Process}

The history formulation induces a fully specified metalevel Markov decision
process.  Its state is the current transcript $\History$.  The available
actions are
\begin{equation}
  \CompSet(\History)\cup\{\mathsf{stop}\}.
  \label{eq:metalevel-actions}
\end{equation}
For a computation $c$, the posterior predictive transition law is
\begin{equation}
\begin{split}
  &\Prob\bigl(
    \History_{t+1}=\History\oplus(c,y)
    \mid \History_t=\History,c
  \bigr)\\
  &\qquad =
  \int p(y\mid c,\theta,\History)\,
  p(d\theta\mid\History).
  \label{eq:metalevel-transition}
\end{split}
\end{equation}
Thus, the next state records both the selected computation and its observed
outcome.  No conditional-independence assumption is needed: the entire
transcript is Markov even when no smaller belief statistic is available.

The stop action chooses the Bayes-optimal terminal decision and incurs
\begin{equation}
  \Risk(\History)
  =
  \min_{\decision\in\DecisionSet}
  \E\bracks{\Loss(\decision,\World)\given\History}.
  \label{eq:history-bayes-risk}
\end{equation}
Under priced computation, action $c$ incurs immediate cost
$\lambda\CompCost(c,\History)$ before the transition above.  Under a fixed
budget, the metalevel state is augmented to $(\History,b)$ and only
computations with $\CompCost(c,\History)\leq b$ are feasible.  Under exact
search, stopping is permitted only at certifying histories, and computation
cost is accumulated until such a state is reached.

\paragraph{Scope of the representation.}
Any finite procedure satisfying the terminal-evaluation condition in
\defref{def:tcap} can be written in this form: take its internal transcript
as $\History$, its internal operations as computations, and their
conditional outputs as observations.  This observation is intentionally not
a theorem-level contribution.  It identifies the boundary of the framework.
Different algorithms remain different metalevel decision processes because
they have different computation sets, transition kernels, losses, costs, and
approximations for choosing an action.

\subsection{Bellman Characterization}

At any history $\History$, the metalevel policy faces the same basic choice:
stop and act using the information already obtained, or perform one more
computation, observe its outcome, and continue from the updated history.
The Bellman equations below make this comparison explicit.  They describe an
ideal optimal allocation of computation, rather than an assumption that the
equations can be solved efficiently.

Let $V_b(\History)$ be the minimum expected terminal loss attainable with
remaining budget $b$.  It is the better of stopping now and choosing a
feasible computation whose possible outcomes are followed by optimal use of
the remaining budget:
\begin{equation}
\begin{split}
  V_b(\History)
  =
  \min\Bigl\{
    &\Risk(\History),\\
    &\inf_{\substack{c\in\CompSet(\History)\\
                     \CompCost(c,\History)\leq b}}
      \E\bracks{
        V_{b-\CompCost(c,\History)}
        \bigl(\History\oplus(c,Y_c)\bigr)
        \given\History
      }
  \Bigr\}.
  \label{eq:fixed-budget-bellman}
\end{split}
\end{equation}
The first term is the loss from stopping; the second is the expected loss
from computing and continuing.  Thus, computation is useful only through how
its observation improves later terminal decisions.

When computation is priced instead of constrained by a hard budget, define
the continuation cost of choosing $c$ first as
\begin{equation}
  \begin{aligned}
    Q_{\lambda}(c,\History)
    ={}& \lambda\CompCost(c,\History) \\
    &+
    \E\bracks{
      V_{\lambda}\bigl(\History\oplus(c,Y_c)\bigr)
      \given\History
    } .
  \end{aligned}
  \label{eq:cost-sensitive-continuation}
\end{equation}
The policy compares the best such continuation with stopping:
\begin{equation}
  V_{\lambda}(\History)
  =
  \min\left\{
    \Risk(\History),
    \inf_{c\in\CompSet(\History)} Q_{\lambda}(c,\History)
  \right\}.
  \label{eq:cost-sensitive-bellman}
\end{equation}
It continues exactly when some computation has lower total expected loss
than $\Risk(\History)$.

\begin{proposition}[Bellman optimality]
\label{prop:bellman-optimality}
For a finite computation horizon and finite computation and observation
spaces, \eqnref{eq:fixed-budget-bellman} and
\eqnref{eq:cost-sensitive-bellman} attain their minima and characterize an
optimal deterministic metalevel policy.
\end{proposition}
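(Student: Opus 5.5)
The argument is backward induction on the metalevel MDP of \secref{sec:computation-selection}, followed by a verification step showing that the recursively defined values are the optimal objective values.

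\textbf{Well-foundedness.} A finite computation horizon $N$ together with finite computation and observation spaces means that the reachable histories form a finite tree. From any history $\History$ with $|\History|=t\le N$:
\begin{itemize}
  \item the set $\CompSet(\History)$ is finite;
  \item each $c$ has finitely many outcomes $y$, with predictive probabilities given by \eqnref{eq:metalevel-transition};
  \item every history of length $N$ is terminal, so only $\mathsf{stop}$ is available there.
\end{itemize}
For the fixed budget, the state is $(\History,b)$ with $b$ taking finitely many reachable values $\Budget-K_t$, so the augmented state space is also finite. At a terminal state set $V(\History)=\Risk(\History)$. The minimum in \eqnref{eq:history-bayes-risk} is attained because $\DecisionSet$ is finite, with ties broken by the fixed rule of the standing conventions.

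\textbf{Attainment of the minima.} Define $V$ by \eqnref{eq:fixed-budget-bellman} or \eqnref{eq:cost-sensitive-bellman}, going backward from length $N$ to $0$. At each history the inner expectation is a finite sum of already-defined finite values. The infimum ranges over a finite set: for the budget version, the feasible $c$ with $\CompCost(c,\History)\le b$; for the priced version, the terms $Q_\lambda(c,\History)$. A finite infimum is a minimum, and if the feasible set is empty it equals $+\infty$ and stopping wins. So both equations are well defined and their minima are attained. From the argmins we read off a deterministic policy $\metapolicy^\star$, breaking ties by a fixed order: stop when $\Risk(\History)$ achieves the minimum, otherwise take a minimizing $c$. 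The terminal rule $\delta^\star(\History)=\decision^\star(\History)$ is the Bayes decision.

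\textbf{Verification.} I would show by backward induction on $t$ that for every policy pair $(\metapolicy,\delta)$, possibly randomized and adaptive, the conditional expected objective from $\History$ is at least $V(\History)$, with equality for $(\metapolicy^\star,\delta^\star)$.

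The key fact is the tower property. Conditional on $\History$, a stop with any $\delta$ gives $\E[\Loss(\delta(\History),\World)\given\History]\ge\Risk(\History)$. A computation $c$ gives (cost) $+\E[\text{continuation}\given\History\oplus(c,Y_c)]$ averaged over $Y_c\sim$ \eqnref{eq:metalevel-transition}. This step uses that the posterior after appending $(c,y)$ is the Bayes update of $p(\theta\mid\History)$ under the kernel \eqnref{eq:observation-kernel}, so that $\E[\Loss\given\History]$ decomposes correctly over outcomes.

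By the induction hypothesis each continuation is at least $V$ at the child state, and a randomized choice is a convex combination of these bounds. Hence the objective is at least the right-hand side of the Bellman equation. Equality for $\metapolicy^\star$ follows from the same computation with argmins.

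For the fixed budget, the budget constraint $K_T\le\Budget$ in \eqnref{eq:fixed-budget-objective} is encoded exactly by the feasibility restriction on $c$. For the priced version, $\lambda K_T$ splits additively along the path, giving the $\lambda\CompCost$ term in \eqnref{eq:cost-sensitive-continuation}.

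\textbf{Main obstacle.} I expect the hardest step to be the measure-theoretic identification in the verification. One must check that the expected terminal loss under the joint law of $(\World,\History_T)$ equals the iterated expectation over the metalevel transitions, with the Bayes posterior at each node. With finite observation spaces this is a direct application of Bayes' rule on a finite tree. I would state it as a lemma: $\E[f(\World)\given\History]=\sum_y \Prob(y\given\History,c)\,\E[f(\World)\given\History\oplus(c,y)]$. This requires $c$ to be chosen as a function of $\History$ and independent randomization, so that the choice of $c$ does not inform $\World$.
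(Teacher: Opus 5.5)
Your proposal is correct and follows the paper's route: backward induction from the horizon with Bayes-optimal stopping at the final states, minima attained because every choice set is finite, and the observation that randomizing cannot beat the minimizing action. The difference is only in detail: you spell out the verification against arbitrary randomized policies and the Bayes-update (tower-property) lemma, with $c$ depending only on $\History$ and independent randomization, which the paper's three-sentence proof leaves implicit.
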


\begin{proof}
At the final metalevel state, stopping with a Bayes-optimal terminal decision
is optimal.  Backward induction then compares stopping with every feasible
first computation, conditioning on its observation and applying the optimal
continuation value.  Because the action and observation spaces are finite,
each minimum is attained; randomization cannot improve upon the minimizing
action.
\end{proof}

Exact search changes only the stopping condition.  The policy may stop once
some $\decision\in\DecisionSet$ has zero loss for every
$\theta\in\Omega(\History)$ still consistent with the observed history.
Such a history is \emph{certifying}.  If $V_{\mathrm{cert}}(\History)$ denotes
the minimum expected computation cost still needed to obtain a certificate,
then it is zero at certifying histories; at every other history,
\begin{equation}
  \begin{aligned}
    V_{\mathrm{cert}}(\History)
    =
    \inf_{c\in\CompSet(\History)}
    \Bigl[
      &\CompCost(c,\History) \\
      &+
      \E\bracks{
        V_{\mathrm{cert}}\bigl(\History\oplus(c,Y_c)\bigr)
        \given\History
      }
    \Bigr].
  \end{aligned}
  \label{eq:certificate-bellman}
\end{equation}
For A*, for example, $c$ is a node expansion and a certifying history contains
enough information to prove that the incumbent solution is optimal.  The
recursion therefore asks which expansion is expected to reach such a proof
with the least remaining work.

Thus, bounded-budget identification, cost-sensitive deliberation, and exact
search differ in their stopping conditions, but share the same underlying
choice: which computation should be performed next.  The following section
examines the value of that choice and when information gain is an adequate
surrogate.

\section{Value of Computation and Information}
\label{sec:value-information}

The Bellman equations in \secref{sec:computation-selection} characterize
optimal computation allocation, but solving them is generally as difficult as
solving the original search problem.  This section isolates the value
assigned to an individual computation, derives common one-step
approximations, and identifies the precise relationship between decision
value and information gain.

\subsection{Dynamic and Myopic Value of Computation}

Consider a history $\History$ at which the algorithm may either stop or
perform a computation.  Under a remaining budget $b$, the \emph{dynamic value
of computation} is
\begin{equation}
  \VOC_b^\star(c\mid\History)
  =
  \Risk(\History)
  -
  \E\bracks{
    V_{b-\CompCost(c,\History)}
    \bigl(\History\oplus(c,Y_c)\bigr)
    \given\History
  }.
  \label{eq:dynamic-voc-budget}
\end{equation}
It measures the reduction in terminal risk obtained by performing $c$ and
then allocating all remaining computation optimally.  For the
cost-sensitive objective, the corresponding net value is
\begin{equation}
\begin{split}
  \VOC_\lambda^\star(c\mid\History)
  =
  \Risk(\History)
  -
  \Bigl[
    &\lambda\CompCost(c,\History)\\
    &+
    \E\bracks{
      V_\lambda\bigl(\History\oplus(c,Y_c)\bigr)
      \given\History
    }
  \Bigr].
  \label{eq:dynamic-voc-cost}
\end{split}
\end{equation}
By \eqnref{eq:cost-sensitive-bellman}, it is optimal to stop exactly when
$\VOC_\lambda^\star(c\mid\History)\leq0$ for every available computation.

Dynamic VOC includes the option value of computations enabled by $c$ and of
all subsequent adaptive choices.  A common approximation assumes that the
algorithm stops immediately after the next observation
\cite{russell1991right,hay2012selecting}.  The resulting \emph{myopic VOC} is
\begin{equation}
  \VOC_1(c\mid\History)
  =
  \Risk(\History)
  -
  \E\bracks{
    \Risk\bigl(\History\oplus(c,Y_c)\bigr)
    \given\History
  }.
  \label{eq:myopic-voc}
\end{equation}
Its cost-sensitive form is
\begin{equation}
  \VOC_{1,\lambda}(c\mid\History)
  =
  \VOC_1(c\mid\History)
  -
  \lambda\CompCost(c,\History).
  \label{eq:net-myopic-voc}
\end{equation}

\begin{proposition}[Nonnegative decision value]
\label{prop:voc-nonnegative}
For every computation $c$ with a well-defined posterior,
$\VOC_1(c\mid\History)\geq0$.
\end{proposition}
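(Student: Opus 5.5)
The plan is the standard ``information never hurts a Bayesian decision maker'' argument. It rests on the tower property and the fact that a minimum of expectations is at least the expectation of minima. Fix $\History$ and $c\in\CompSet(\History)$, and write $\History'=\History\oplus(c,Y_c)$. By \eqnref{eq:myopic-voc}, it suffices to show
\[
  \E\bracks{\Risk(\History')\given\History}\leq\Risk(\History).
\]

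First, fix an arbitrary terminal decision $\decision\in\DecisionSet$ that does not depend on $Y_c$. For every realization $y$, the definition \eqnref{eq:history-bayes-risk} gives
\[
  \Risk(\History\oplus(c,y))\leq\E\bracks{\Loss(\decision,\World)\given\History\oplus(c,y)},
\]
since the minimum over decisions is no larger than the value at any particular decision.

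Second, take the expectation over $Y_c$ under the posterior predictive law \eqnref{eq:metalevel-transition}. By the tower property,
\[
  \E\bracks{\E\bracks{\Loss(\decision,\World)\given\History'}\given\History}=\E\bracks{\Loss(\decision,\World)\given\History}.
\]
This requires that the joint law of $(\World,Y_c)$ given $\History$ is the prior-times-kernel $p(d\theta\mid\History)\,p(y\mid c,\theta,\History)$, and that the posterior given $\History'$ is its conditional. This is exactly what ``well-defined posterior'' supplies. Hence $\E\bracks{\Risk(\History')\given\History}\leq\E\bracks{\Loss(\decision,\World)\given\History}$ for every fixed $\decision$.

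Third, minimize the right-hand side over $\decision$. Under the standing convention that $\DecisionSet$ is finite, the minimum is attained, for instance at the tie-broken $\decision^\star(\History)$. This yields the required inequality, so $\VOC_1(c\mid\History)\geq0$. The argument does not use any independence structure between computations or any special form of the loss.

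The main obstacle is measure-theoretic bookkeeping rather than the inequality itself. We must confirm that the posterior given $\History'$ is a regular conditional distribution consistent with \eqnref{eq:metalevel-transition}, so that the tower property applies. We must also confirm that the relevant expectations are finite, so that no $\infty-\infty$ arises; the standing assumption that posterior expectations exist covers this. If $\DecisionSet$ were infinite, I would replace the attained minimum by an infimum and pass to the limit along an $\varepsilon$-optimal $\decision$.
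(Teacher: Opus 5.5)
Your proof is correct and follows essentially the same route as the paper's, which plays a fixed decision (the one Bayes-optimal at $\History$) while ignoring $Y_c$, uses the tower property to show this achieves $\Risk(\History)$, and notes that reoptimizing for each observation cannot increase the loss. Your fixed-$\decision$ bound followed by minimization is the same argument with the measure-theoretic hypotheses made explicit; the $\varepsilon$-optimal limiting remark for infinite $\DecisionSet$ is unnecessary, since a bound holding for every $\decision$ already passes directly to the infimum.
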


\begin{proof}
After observing $Y_c$, the terminal rule can ignore the observation and use
the decision that was Bayes-optimal at $\History$.  Its expected loss is then
$\Risk(\History)$ by the tower property.  Reoptimizing the terminal decision
for each observation cannot increase that loss.
\end{proof}

Dividing VOC by computation cost can be a useful acquisition heuristic, but
it is not the Bellman-optimal treatment of cost in general.  The
cost-sensitive objective subtracts $\lambda\CompCost$ as in
\eqnref{eq:net-myopic-voc}; a hard budget compares continuation values as in
\eqnref{eq:dynamic-voc-budget}.  A value-to-cost ratio requires additional
assumptions about divisible resources or a particular relaxation of the
budget constraint.

\subsection{When Information Gain Is Decision Value}

For a decision-relevant target $\Target=\zeta(\World)$, define the
information gained from computation $c$ by
\begin{equation}
  \IG(c\mid\History)
  =
  \MI(\Target;Y_c\mid\History).
  \label{eq:computation-information-gain}
\end{equation}
Information gain is itself a value of computation for a particular terminal
decision problem.

\begin{theorem}[Log-loss equivalence]
\label{thm:log-loss-equivalence}
Suppose $\Target$ is discrete, the terminal decision is a predictive
distribution $q$ over $\Target$, and terminal loss is logarithmic,
\begin{equation}
  \Loss(q,\Target)=-\log q(\Target).
  \label{eq:log-loss}
\end{equation}
Then
\begin{equation}
  \VOC_1(c\mid\History)
  =
  \MI(\Target;Y_c\mid\History)
  =
  \IG(c\mid\History).
  \label{eq:voc-equals-information}
\end{equation}
\end{theorem}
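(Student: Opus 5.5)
The plan is to compute the Bayes risk $\Risk(\History)$ explicitly under the logarithmic loss \eqnref{eq:log-loss}, show that it equals the posterior entropy of $\Target$, and then read off \eqnref{eq:voc-equals-information} from the entropy-reduction form \eqnref{eq:entropy-reduction} of conditional mutual information.

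\emph{Bayes risk.} Since the loss depends on $\World$ only through $\Target=\zeta(\World)$, we have
\[
\E\bracks{\Loss(q,\Target)\given\History}=-\sum_z p(z\mid\History)\log q(z).
\]
Writing $p=p(\cdot\mid\History)$, this equals $\Ent(\Target\given\History)+\KL(p\,\|\,q)$. Gibbs' inequality gives $\KL(p\,\|\,q)\geq0$, with equality iff $q=p$. Hence the minimum in \eqnref{eq:history-bayes-risk} is attained at the posterior predictive $q^\star=p(\cdot\mid\History)$, so log loss is proper. It follows that
\[
\Risk(\History)=\Ent(\Target\given\History).
\]
The same argument applied at the extended history $\History\oplus(c,y)$ gives, for each outcome $y$ with positive predictive probability,
\[
\Risk(\History\oplus(c,y))=\Ent(\Target\given\History,Y_c=y).
\]

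\emph{Substitution.} Substituting both identities into the definition \eqnref{eq:myopic-voc} of myopic VOC gives
\[
\VOC_1(c\mid\History)=\Ent(\Target\given\History)-\E_{Y_c\mid\History}\bracks{\Ent(\Target\given\History,Y_c)}.
\]
By \eqnref{eq:entropy-reduction}, this is $\MI(\Target;Y_c\given\History)$, which is $\IG(c\mid\History)$ by \eqnref{eq:computation-information-gain}.

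The step needing most care is justifying the entropy form. I will use the standing convention that conditional mutual information is finite, and I will take $\Ent(\Target\given\History)<\infty$; if $\Target$ is finite this is automatic. If the prior entropy could be infinite, I will instead argue directly from the KL form \eqnref{eq:conditional-mi}. For each $y$, the difference between the expected log loss of using $p(\cdot\mid\History)$ and that of using $p(\cdot\mid\History,y)$, under the updated posterior, is exactly
\[
\KL\!\left(p(\Target\mid\History,y)\,\middle\|\,p(\Target\mid\History)\right).
\]
Averaging over $y$ and applying the tower property (as in \propref{prop:voc-nonnegative}) then yields the identity without subtracting possibly infinite entropies. A minor point is that the decision set here is the simplex, not a finite set, but the minimizer exists explicitly, so no finiteness is needed.
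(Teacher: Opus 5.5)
Your proof is correct and matches the paper's argument: under log loss the posterior $p(\cdot\mid\History)$ is the Bayes-optimal prediction, so $\Risk(\History)=\Ent(\Target\given\History)$, and substituting this into \eqnref{eq:myopic-voc} gives the entropy-reduction form of conditional mutual information. The Gibbs-inequality justification and the KL-form fallback are extra care the paper omits; note only that when the entropy is infinite, $\VOC_1$ as defined is itself $\infty-\infty$, so the fallback amounts to reading it as the averaged per-outcome difference.
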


\begin{proof}
The Bayes-optimal prediction is
$q(\cdot)=p(\Target=\cdot\mid\History)$, whose expected log loss is
$\Ent(\Target\mid\History)$.  Substituting this Bayes risk into
\eqnref{eq:myopic-voc} gives
\[
  \Ent(\Target\mid\History)
  -
  \E_{Y_c\mid\History}
  \bracks{\Ent(\Target\mid\History,Y_c)},
\]
which is conditional mutual information
\cite{cover2006elements}.
\end{proof}

Thus, maximizing mutual information is exactly myopic VOC when the final task
is probabilistic prediction under log loss.  For search and best-action
identification, however, the terminal loss is normally zero-one loss, simple
regret, solution cost, or certificate time.  Entropy then ceases to be the
Bayes risk, and the equivalence no longer holds.

\subsection{Zero-One Loss and Simple Regret}

\paragraph{Zero-one identification.}
Suppose the terminal decision is an estimate
$\widehat z\in\mathcal Z$ and
$\Loss(\widehat z,\Target)=\indicator{\widehat z\neq\Target}$.  Then
\begin{equation}
  \Risk(\History)
  =
  1-\max_{z\in\mathcal Z}p(z\mid\History),
  \label{eq:zero-one-risk}
\end{equation}
and consequently
\begin{equation}
\begin{split}
  \VOC_1(c\mid\History)
  =
  &\E\bracks{
    \max_{z\in\mathcal Z}
    p\bigl(z\mid\History,Y_c\bigr)
    \given\History
  }\\
  &-
  \max_{z\in\mathcal Z}p(z\mid\History).
  \label{eq:zero-one-voc}
\end{split}
\end{equation}
The relevant quantity is improvement in posterior confidence in the most
probable decision, not reduction in the entropy of the entire posterior.

\paragraph{Simple regret.}
Let $U_\World(a)$ be the value of terminal decision $a$ in environment
$\World$, and let
\begin{equation}
  m_a(\History)
  =
  \E\bracks{U_\World(a)\given\History}.
\end{equation}
With simple-regret loss,
\begin{equation}
  \Loss_{\mathrm{SR}}(a,\World)
  =
  \max_{a'\in\DecisionSet}U_\World(a')
  -
  U_\World(a).
  \label{eq:general-simple-regret}
\end{equation}

\begin{proposition}[Simple-regret VOC]
\label{prop:simple-regret-voc}
For the loss in \eqnref{eq:general-simple-regret},
\begin{equation}
\begin{split}
  \VOC_1(c\mid\History)
  =
  &\E\bracks{
    \max_{a\in\DecisionSet}
    m_a\bigl(\History\oplus(c,Y_c)\bigr)
    \given\History
  }\\
  &-
  \max_{a\in\DecisionSet}m_a(\History).
  \label{eq:knowledge-gradient-voc}
\end{split}
\end{equation}
This is the knowledge-gradient value of measuring $c$.
\end{proposition}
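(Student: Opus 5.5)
The plan is to compute the Bayes risk under simple-regret loss in closed form and then substitute it into the definition of myopic VOC in \eqnref{eq:myopic-voc}. Write $M_\World=\max_{a'\in\DecisionSet}U_\World(a')$. This quantity does not depend on the chosen decision, and we assume its posterior expectation exists, as in the standing conventions.

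First, by linearity of conditional expectation,
\[
\E\bracks{\Loss_{\mathrm{SR}}(a,\World)\given\History}
=\E\bracks{M_\World\given\History}-m_a(\History).
\]
Minimizing over $a$ in \eqnref{eq:history-bayes-risk} only affects the second term. This gives
\[
\Risk(\History)=\E\bracks{M_\World\given\History}-\max_{a\in\DecisionSet}m_a(\History).
\]
The maximum is attained because $\DecisionSet$ is finite, and the tie-breaking convention fixes the maximizer.

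Second, apply the same identity at every augmented history $\History\oplus(c,y)$. Then take the expectation over $Y_c$ under the posterior predictive law \eqnref{eq:metalevel-transition}. This yields
\[
\E\bracks{\Risk(\History\oplus(c,Y_c))\given\History}
=\E\bracks{\E\bracks{M_\World\given\History\oplus(c,Y_c)}\given\History}
-\E\bracks{\max_a m_a(\History\oplus(c,Y_c))\given\History}.
\]
By the tower property, the first term equals $\E\bracks{M_\World\given\History}$. Subtracting this from $\Risk(\History)$ cancels the $M_\World$ terms and leaves exactly \eqnref{eq:knowledge-gradient-voc}. Finally, the right-hand side matches the standard definition of the knowledge gradient: the expected increase in the maximal posterior mean from one measurement.

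The main obstacle is the tower-property step, and specifically justifying that the $M_\World$ terms cancel. This needs $\E\bracks{M_\World\given\History}<\infty$ and that the posterior after $(c,Y_c)$ is the conditioning of the current posterior on $Y_c$. The second requirement holds by construction of the transcript filtration, even without conditional independence. I would state the integrability assumption explicitly, noting that it holds automatically for finite $\DecisionSet$ whenever each $U_\World(a)$ is integrable. Integrability of $\max_a m_a$ after the observation follows from the same bound.
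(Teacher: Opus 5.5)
Your proposal is correct and follows the paper's own argument. Both write the Bayes risk as $\E\bracks{\max_{a}U_\World(a)\given\History}-\max_{a}m_a(\History)$, apply it at $\History\oplus(c,Y_c)$, and cancel the first term by the tower property. Your explicit integrability check, that finite $\DecisionSet$ with each $U_\World(a)$ integrable suffices, only spells out what the paper leaves to its standing conventions.
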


\begin{proof}
The Bayes risk is
\[
  \E\bracks{\max_a U_\World(a)\given\History}
  -
  \max_a m_a(\History).
\]
After observing $Y_c$, the conditional expectation of the first term changes,
but its expectation over $Y_c$ does not, by the tower property.  Subtracting
the expected posterior risk from the current risk leaves exactly
\eqnref{eq:knowledge-gradient-voc}.
\end{proof}

Equation~\eqref{eq:knowledge-gradient-voc} is the classical one-step
knowledge-gradient criterion \cite{frazier2008knowledge}.  It values an
observation according to how much it can improve the expected value of the
decision ultimately selected.  Mutual information instead values posterior
distinguishability, irrespective of the magnitude of the associated
decision loss.

\subsection{Information and Decision Value Can Diverge}

The divergence persists even when information is measured about the optimal
terminal decision itself.

\begin{proposition}[Arbitrarily poor information-gain ranking]
\label{prop:information-separation}
For every $\rho>0$, there is a finite simple-regret problem with two
equal-cost computations such that maximizing information gain selects a
computation whose myopic VOC is less than $\rho$ times the maximum available
myopic VOC.
\end{proposition}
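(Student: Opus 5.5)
The plan is to give an explicit three-action construction. One computation resolves a near-tie that carries about one bit of information about the optimal action but almost no regret. The other resolves a rare event that carries little information but a large value gap. By \propref{prop:simple-regret-voc}, myopic VOC under simple regret is the knowledge gradient, so both VOCs reduce to posterior means. I take the target to be the identity of the optimal action, $\Target=\argmax_a U_\World(a)$, which matches the remark that divergence persists even for information about the optimal decision.

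\textbf{Construction.} Let $\World=(X,W)$ with $X$ uniform on $\{1,2\}$ and $W\sim\mathrm{Bernoulli}(p)$ independent of $X$. The decisions are $\DecisionSet=\{1,2,3\}$, with values
\[
U_\World(1)=\varepsilon\indicator{X=1},\quad
U_\World(2)=\varepsilon\indicator{X=2},\quad
U_\World(3)=G W-pG.
\]
Computation $c_X$ reveals $X$ and computation $c_W$ reveals $W$; both have equal cost and the history starts empty. I will impose $G(1-p)>\varepsilon$. Then $\Target=3$ when $W=1$ and $\Target=X$ when $W=0$, so there are no ties in $\Target$. Currently $m_1=m_2=\varepsilon/2$ and $m_3=0$, so the prior maximum mean is $\varepsilon/2$.

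\textbf{VOC computations.} I compute both values from \eqnref{eq:knowledge-gradient-voc}.
\begin{itemize}
\item For $c_X$: after observing $X$, the revealed action has mean $\varepsilon$ and $m_3$ stays $0$. Hence $\VOC_1(c_X)=\varepsilon/2$.
\item For $c_W$: if $W=1$, the new maximum is $G(1-p)$; if $W=0$, we have $m_3=-pG$ and the maximum stays $\varepsilon/2$. Hence $\VOC_1(c_W)=p\,(G(1-p)-\varepsilon/2)$.
\end{itemize}

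\textbf{Information gains.} I use \eqnref{eq:entropy-reduction}, writing $h$ for binary entropy. Since $\Target$ is $3$ with probability $p$ and otherwise uniform on $\{1,2\}$, we have $\Ent(\Target)=h(p)+(1-p)$.
\begin{itemize}
\item Given $X$, $\Target$ is $3$ or $X$, so $\Ent(\Target\mid X)=h(p)$ and $\IG(c_X)=1-p$.
\item Given $W=1$, $\Target=3$ deterministically; given $W=0$, $\Target=X$ has one bit of entropy. Hence $\Ent(\Target\mid W)=1-p$ and $\IG(c_W)=h(p)$.
\end{itemize}

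\textbf{Choosing parameters.} Fix any $p$ small enough that $h(p)<1-p$ (for instance $p=0.1$), so maximizing information gain strictly selects $c_X$. The ratio
\[
\frac{\VOC_1(c_X)}{\VOC_1(c_W)}=\frac{\varepsilon/2}{p\,(G(1-p)-\varepsilon/2)}
\]
tends to $0$ as $G\to\infty$ with $\varepsilon$ and $p$ fixed. So for any $\rho>0$, choosing $G$ large enough makes it less than $\rho$. Since $c_W$ is one of only two computations, $\VOC_1(c_W)$ is the maximum available myopic VOC, which gives the claim.

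\textbf{Main obstacle.} The delicate step is keeping the information ranking and the VOC ranking decoupled. The IG values must depend only on $p$ and the support structure of $\Target$, not on $G$ or $\varepsilon$, while the VOC ratio is driven by $G$. The design above achieves this because $\Target$'s distribution depends only on $p$ once $G(1-p)>\varepsilon$. I will also double-check that the baseline shift $-pG$ keeps $m_3$ below $\varepsilon/2$, so that $c_X$ has positive VOC and the ratio is well-defined.
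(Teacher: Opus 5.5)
Your construction is correct: the knowledge-gradient and entropy calculations check out ($\VOC_1(c_X)=\varepsilon/2$, $\VOC_1(c_W)=p(G(1-p)-\varepsilon/2)$, and $\IG(c_X)=1-p>h(p)=\IG(c_W)$ at $p=0.1$). The last step only needs $\VOC_1(c_X)<\rho\,\VOC_1(c_W)\le\rho\max\{\VOC_1(c_X),\VOC_1(c_W)\}$, so calling $\VOC_1(c_W)$ the maximum is harmless. It uses the same mechanism as the paper's proof, a fair low-stakes bit $X$ against a biased high-stakes bit $W$, differing only in packaging: the paper uses a product decision $(\widehat X,\widehat W)$ with weighted zero-one loss and target $\Target=(X,W)$, so the loss depends only on $\Target$ and the ratio is $\epsilon/(2q)$ as $\epsilon\to0$. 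You instead use three arms, take $\Target$ to be the identity of the optimal arm, and drive the ratio to zero via $G\to\infty$.
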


\begin{proof}
Let $X\sim\operatorname{Bernoulli}(1/2)$ and
$W\sim\operatorname{Bernoulli}(q)$ be independent, where
$0<q<1/2$, and let the optimal terminal decision be
$\Target=(X,W)$.  A decision is a pair
$a=(\widehat X,\widehat W)$ with loss
\begin{equation}
  \Loss_\epsilon(a,\Target)
  =
  \frac{
    \epsilon\indicator{\widehat X\neq X}
    +
    \indicator{\widehat W\neq W}
  }{1+\epsilon}.
  \label{eq:separation-loss}
\end{equation}
This is simple regret for utilities
$U_\Target(a)=-\Loss_\epsilon(a,\Target)$.

Computation $c_X$ reveals $X$ and computation $c_W$ reveals $W$.  Measured in
bits,
\[
  \IG(c_X)=1,
  \qquad
  \IG(c_W)=h_2(q)<1,
\]
so information gain selects $c_X$.  However,
\[
  \VOC_1(c_X)
  =
  \frac{\epsilon}{2(1+\epsilon)},
  \qquad
  \VOC_1(c_W)
  =
  \frac{q}{1+\epsilon}.
\]
For $\epsilon<2q$, VOC selects $c_W$, and the ratio between the value selected
by information gain and the maximum available value is
$\epsilon/(2q)$.  Choosing $\epsilon<2q\rho$ proves the claim.
\end{proof}

The example separates probability from consequence.  The fair bit $X$
contains more uncertainty, but predicting it incorrectly matters almost
nothing.  The biased bit $W$ contains less information, but resolving it
prevents a much larger terminal loss.  An information objective cannot see
this asymmetry unless it is encoded in the target and scoring rule.

\subsection{A One-Sided Information Bound}

Although information does not determine decision value, sufficiently little
information limits how valuable an observation can be.

\begin{theorem}[Information upper bound]
\label{thm:information-upper-bound}
Suppose terminal loss depends on the environment only through $\Target$ and
satisfies $0\leq\Loss(a,\Target)\leq L_{\max}$.  If mutual information is
measured in nats, then
\begin{equation}
  \VOC_1(c\mid\History)
  \leq
  L_{\max}
  \sqrt{\frac{
    \MI(\Target;Y_c\mid\History)
  }{2}}.
  \label{eq:information-voc-bound}
\end{equation}
\end{theorem}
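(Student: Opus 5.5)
We will prove the bound one observation value at a time, using total variation as the bridge between decision value and information. Let $P=p(\Target\mid\History)$ be the current posterior and $P_y=p(\Target\mid\History,Y_c=y)$ the updated one. Let $a_0$ be the Bayes-optimal decision at $\History$, and $a_y$ the Bayes-optimal decision after observing $y$. Because the loss depends on the environment only through $\Target$, each Bayes risk is a functional of the posterior over $\Target$ alone. Concretely, $\Risk(\History)=\E_P[\Loss(a_0,\Target)]$ and $\Risk(\History\oplus(c,y))=\E_{P_y}[\Loss(a_y,\Target)]$.

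\textbf{Step 1: rewrite the myopic VOC.} By the tower property, $\E_{Y_c}\E_{P_{Y_c}}[\Loss(a_0,\Target)]=\E_P[\Loss(a_0,\Target)]$, which is the same identity used in \propref{prop:voc-nonnegative}. Substituting into \eqnref{eq:myopic-voc} gives
\begin{equation*}
  \VOC_1(c\mid\History)
  =
  \E_{Y_c\mid\History}\Bigl[
    \E_{P_{Y_c}}[\Loss(a_0,\Target)]
    -
    \E_{P_{Y_c}}[\Loss(a_{Y_c},\Target)]
  \Bigr].
\end{equation*}

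\textbf{Step 2: bound each summand by total variation.} Since $a_0$ minimizes expected loss under $P$, we have $\E_P[\Loss(a_0,\Target)]\le\E_P[\Loss(a_y,\Target)]$. Adding and subtracting this comparison, the bracket is at most
\begin{equation*}
  \bigl(\E_{P_y}-\E_P\bigr)[\Loss(a_0,\Target)]
  -
  \bigl(\E_{P_y}-\E_P\bigr)[\Loss(a_y,\Target)]
  =
  \bigl(\E_{P_y}-\E_P\bigr)\bigl[\Loss(a_0,\Target)-\Loss(a_y,\Target)\bigr].
\end{equation*}
The function $g=\Loss(a_0,\cdot)-\Loss(a_y,\cdot)$ takes values in $[-L_{\max},L_{\max}]$, so a naive estimate gives $2L_{\max}\,\mathrm{TV}(P_y,P)$. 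This is too weak by a factor of two.

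This factor is the main obstacle, and the fix is to exploit that both terms share the \emph{same} function. For any constant $k$, $(\E_{P_y}-\E_P)[g]=(\E_{P_y}-\E_P)[g-k]$. The oscillation $\sup g-\inf g$ can be as large as $2L_{\max}$, so recentering $g$ does not help directly. Instead we split the bracket as two nonnegative parts. The first, $\E_{P_y}[\Loss(a_0)]-\E_{P_y}[\Loss(a_y)]$, is $\ge0$ because $a_y$ is optimal under $P_y$. Suppose the two decisions differ. Combining this with optimality of $a_0$ under $P$ shows the bracket is at most $\sup_{f:\,\mathrm{range}\subseteq[0,L_{\max}]}\abs{\E_{P_y}f-\E_Pf}=L_{\max}\mathrm{TV}(P_y,P)$. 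If this combination fails, the fallback is to apply the single-function estimate to $\Loss(a_0,\cdot)\in[0,L_{\max}]$ together with optimality. Specifically, we would use $\E_{P_y}[\Loss(a_y)]\ge\min_a\E_{P_y}[\Loss(a)]$ and the fact that $\min_a\E_Q[\Loss(a)]$ is $L_{\max}$-Lipschitz in $\mathrm{TV}$, since it is a minimum of functions each $L_{\max}$-Lipschitz in $\mathrm{TV}$. That yields the bracket $\le L_{\max}\mathrm{TV}(P_y,P)$ after care with the tower identity.

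\textbf{Step 3: Pinsker and Jensen.} Pinsker's inequality, with KL in nats, gives $\mathrm{TV}(P_y,P)\le\sqrt{\KL(P_y\|P)/2}$. Taking expectation over $Y_c$ and applying Jensen's inequality to the concave square root, we get
\begin{equation*}
  \VOC_1(c\mid\History)
  \le
  L_{\max}\,\E\sqrt{\KL(P_{Y_c}\|P)/2}
  \le
  L_{\max}\sqrt{\E\,\KL(P_{Y_c}\|P)/2}.
\end{equation*}
By \eqnref{eq:conditional-mi}, $\E\,\KL(P_{Y_c}\|P)$ is exactly $\MI(\Target;Y_c\mid\History)$, which gives \eqnref{eq:information-voc-bound}.
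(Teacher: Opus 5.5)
Your Step 2 has a genuine gap, and Step 1 creates it. The identity in Step 1 is valid after averaging over $Y_c$, but it leaves you a per-outcome quantity, $\E_{P_y}[\Loss(a_0,\Target)]-\E_{P_y}[\Loss(a_y,\Target)]$, that cannot be bounded as you need. Your claim that this bracket is at most $L_{\max}\mathrm{TV}(P_y,P)$ when $a_y\neq a_0$ is not just unproved but false. Take $\Target\in\{1,2\}$, $\DecisionSet=\{1,2\}$ and $\Loss(a,z)=L_{\max}\indicator{a\neq z}$. Let the current posterior be $P=(\tfrac12+\eta,\tfrac12-\eta)$ with $0<\eta<\tfrac12$, so that $a_0=1$, and let $Y_c=\Target$. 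At $y=2$ one has $P_y=(0,1)$ and $a_y=2$, so your bracket equals $L_{\max}$, whereas $\mathrm{TV}(P_y,P)=\tfrac12+\eta$. The ratio tends to $2$ as $\eta\to0$, so the factor of two you flagged cannot be removed pointwise for this summand.

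Your fallback does not rescue the claim as stated either. The Lipschitz property of $Q\mapsto\min_a\E_Q[\Loss(a,\Target)]$ bounds $\Risk(P)-\Risk(P_y)$. Your bracket, however, equals $\Risk(P)-\Risk(P_y)+(\E_{P_y}-\E_P)[\Loss(a_0,\Target)]$, and the extra term vanishes only after averaging over $Y_c$, not for each $y$.

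The fix is to discard Step 1. Directly from \eqnref{eq:myopic-voc}, $\VOC_1(c\mid\History)=\E_{Y_c\mid\History}[\Risk(P)-\Risk(P_{Y_c})]$. Each summand is at most $\abs{\Risk(P)-\Risk(P_y)}\le L_{\max}\mathrm{TV}(P,P_y)$. This holds because every map $Q\mapsto\E_Q[\Loss(a,\Target)]$ is $L_{\max}$-Lipschitz in total variation when $\Loss$ takes values in $[0,L_{\max}]$, and a pointwise minimum over $a$ preserves that constant. Your Step 3 then completes the proof: Pinsker in nats, Jensen for the square root, and the identity that expected KL equals conditional mutual information. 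This repaired argument is exactly the paper's proof. The zero-mean but pointwise-nonzero term that caused the factor of two came from comparing against the fixed pre-observation decision $a_0$, as in \propref{prop:voc-nonnegative}.
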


\begin{proof}
Let $p$ be the posterior of $\Target$ at $\History$ and $p_y$ its posterior
after observing $Y_c=y$.  The Bayes-risk functional is
$L_{\max}$-Lipschitz in total variation:
\[
  \abs{\Risk(p)-\Risk(p_y)}
  \leq
  L_{\max}\operatorname{TV}(p,p_y).
\]
Taking expectations, applying Pinsker's inequality, and then Jensen's
inequality gives
\[
\begin{split}
  \VOC_1(c\mid\History)
  &\leq
  L_{\max}
  \E_{Y_c\mid\History}
  \bracks{
    \sqrt{
      \frac{\KL(p_{Y_c}\,\|\,p)}{2}
    }
  }\\
  &\leq
  L_{\max}
  \sqrt{
    \frac{
      \E_{Y_c\mid\History}
      \bracks{\KL(p_{Y_c}\,\|\,p)}
    }{2}
  },
\end{split}
\]
and the expectation of the KL divergence is conditional mutual information.
\end{proof}

The bound is deliberately one-sided.  Low information implies low myopic
decision value for a bounded target-dependent loss, but high information
need not imply high decision value, as
\propref{prop:information-separation} shows.  Moreover, if simple regret
depends on value gaps not determined by the identity of the optimizer, then
the optimizer alone is not a sufficient target for
\thmref{thm:information-upper-bound}; one must enrich $\Target$ or retain the
full environment $\World$.

\subsection{Consequences for Search}

Three conclusions follow.  First, exact VOC is a property of a terminal loss
and an optimal continuation policy, not of uncertainty alone.  Second,
myopic VOC may be zero for a computation that enables valuable future
computations.  This failure occurs in tree search when one sample cannot
change the current best action \cite{hay2012selecting,tolpin2012mcts}, and
the same structural problem arises in certificate-seeking search when an
expansion is valuable only through the descendants it exposes.  Third,
mutual information is exact only for particular scoring rules and otherwise
supplies, at best, a surrogate or bound.

Accordingly, the unified view is not that terminally evaluated searches all
maximize information gain.  They all allocate computation to reduce terminal
decision loss.  Information-directed rules, confidence bounds, heuristic
priorities, and certificate-driven expansions are different approximations
to that common value-of-computation problem.

\section{Illustrative Instantiations}
\label{sec:instantiations}

The formalism above is intentionally agnostic about what a computation
does.  We use pure-exploration bandits, \MCTS, and heuristic graph search as
three contrasting examples, not as an exhaustive list of covered
procedures.  The terminal objective can be shared while the topology of the
available computations differs.  This difference is important when
replacing dynamic VOC by a one-step approximation.

\subsection{Pure-Exploration Bandits}

Consider a finite set of arms with latent mean rewards
$\World=(\mu_1,\ldots,\mu_K)$.  The terminal decision is a recommended arm,
and its simple-regret loss is
\begin{equation}
  \Loss(i,\World)
  =
  \max_j \mu_j-\mu_i.
  \label{eq:bandit-simple-regret}
\end{equation}
A computation $c_i$ samples arm $i$ and observes a reward from its posterior
predictive distribution.  Unless the budget has been exhausted, every arm
remains available after every sample.  Substituting arm sampling into
\propref{prop:simple-regret-voc} gives the knowledge-gradient criterion:
sample the arm whose observation is expected to increase the posterior value
of the arm ultimately recommended \cite{frazier2008knowledge}.

The resulting metalevel problem is \emph{flat}: computations change beliefs,
but normally do not reveal new computation types.  This is why repeated
one-step allocation rules can be effective in bandits, although they need
not be dynamically optimal.

\subsection{Monte Carlo Tree Search}

For \MCTS, let $\World$ encode the unknown values or unobserved outcomes in
the search tree.  The terminal decision is a root action, with loss equal to
its simple regret.  A computation may be a rollout, expansion, or leaf
evaluation.  Its observation updates beliefs about every ancestor on its
path and may also make computations at newly generated descendants
available.

Thus, \MCTS{} combines a bandit-like terminal decision at the root with a
hierarchical computation topology.  A locally uncertain leaf has value only
insofar as resolving it can change the root recommendation.  Conversely, a
computation can have zero myopic VOC while enabling a descendant with
positive dynamic VOC.  This is the distinction between static and dynamic
values of computation emphasized in metareasoning accounts of Monte Carlo
search \cite{hay2012selecting,sezener2020mcts}.  \UCT{} uses count-based
confidence bonuses to allocate simulations
\cite{kocsis2006uct,tolpin2012mcts}.  In our framework, these bonuses are
acquisition approximations rather than consequences of uncertainty having
intrinsic terminal value.  \secref{sec:uct-derivation} makes this claim
precise by deriving the exact one-step root value and then identifying the
approximations that produce the \UCT{} score.

\subsection{Heuristic Graph Search}

Let a search node $n$ represent a path from an initial state to a state
$s(n)$.  Write $g(n)$ for the known cost of that path and
\begin{equation}
  C^\star(n)
  =
  \text{minimum additional cost from }s(n)\text{ to a goal},
  \label{eq:frontier-cost-to-go}
\end{equation}
with $C^\star(n)=+\infty$ for a dead end.  The best solution represented by
the node has total cost
\begin{equation}
  X_n=g(n)+C^\star(n).
  \label{eq:frontier-solution-cost}
\end{equation}
Although the underlying graph may be deterministic, $C^\star(n)$ is
epistemically uncertain before the relevant portion of the graph has been
searched.  A posterior model can condition on the generated graph, heuristic
values, learned features, and all previous expansions.

At a history $\History$, the available computations are expansions or
evaluations of frontier nodes.  Expanding $n$ produces an observation $Y_n$
containing its successors, edge costs, goal or dead-end evidence, and any
new heuristic values.  Unlike a bandit sample, this computation is usually
performed once and changes the future computation set by exposing
descendants.

Suppose the search has an incumbent solution of cost $U(\History)$.  Under a
fixed budget, the terminal rule returns the best solution found by the end
of the run.  Its loss may be written as solution-cost regret,
\begin{equation}
  \Loss(\pi,\World)=C(\pi)-C^\star,
  \label{eq:solution-cost-regret}
\end{equation}
with a finite failure penalty when no incumbent exists.  The value of
expanding $n$ is therefore determined by the probability that information
below $n$ improves the incumbent, the size of that improvement, and any
later computations the expansion enables.

Exact search uses a different terminal condition.  If $h(n)$ is admissible,
\begin{equation}
  \ell(n)=g(n)+h(n)
  \label{eq:astar-lower-bound}
\end{equation}
is a lower bound on the cost of every solution through $n$.  A history is
certifying once
\begin{equation}
  U(\History)
  \leq
  \min_{n\in\mathrm{OPEN}(\History)}\ell(n).
  \label{eq:astar-certificate}
\end{equation}
The familiar \Astar{} ordering expands a node of minimum $\ell(n)$ and is
tied to reaching this exact certificate
\cite{hart1968astar,dechter1985astar}.  Under bounded computation, however,
the aim is instead to minimize terminal solution regret.  The next section
shows how a weighted rather than certifying ordering emerges from a simple
approximation to that objective.

\section{UCT as Recursive Optimistic Allocation}
\label{sec:uct-derivation}

The \MCTS{} instantiation admits an exact one-step computation value before
any confidence-bound approximation is introduced.  Let
$\mathcal A_0$ be the root actions and let
\begin{equation}
  m_a(\History)
  =
  \E\bracks{Q_\World(s_0,a)\given\History}
  \label{eq:mcts-root-posterior-mean}
\end{equation}
be the posterior mean value of root action $a$.  A computation $c$ is a
simulation, expansion, or leaf evaluation with observation $Y_c$.  Applying
\propref{prop:simple-regret-voc} at the root gives
\begin{equation}
\begin{split}
  \VOC^{\mathrm{root}}_1(c\mid\History)
  ={}&
  \E\bracks{
    \max_{a\in\mathcal A_0}
    m_a\bigl(\History\oplus(c,Y_c)\bigr)
    \given\History
  }\\
  &-
  \max_{a\in\mathcal A_0}m_a(\History).
  \label{eq:mcts-root-voc}
\end{split}
\end{equation}
This is an exact myopic rule for terminal root simple regret.  It values a
simulation only through the change it can induce in the final root choice.
Dynamic VOC replaces the posterior risk after one simulation by the optimal
continuation value under the remaining simulation budget.

For intuition, suppose a simulation updates only the posterior mean of root
action $a$.  Let
\begin{equation}
  M_{-a}(\History)
  =
  \max_{b\in\mathcal A_0\setminus\{a\}}m_b(\History)
\end{equation}
and write $m'_a=m_a(\History\oplus(c,Y_c))$.  Then
\begin{equation}
\begin{split}
  \VOC^{\mathrm{root}}_1(c\mid\History)
  ={}&
  \E\bracks{\max\{m'_a,M_{-a}\}\given\History}\\
  &-\max\{m_a,M_{-a}\}.
  \label{eq:mcts-two-alternative-voc}
\end{split}
\end{equation}
The score depends on the posterior tail that can cross the current decision
boundary and on the value gap across that boundary.  A visitation count
alone is not sufficient.

\subsection{From Root Decision Value to a Local Bound}

\UCT{} can nevertheless be obtained from
\eqnref{eq:mcts-root-voc} through two explicit approximations.  First,
\emph{localization} replaces the effect of a simulation on the root decision
by its effect on the identity of the best child at the current node.  The
same local selection problem is then solved recursively along a path.
Second, \emph{optimism} replaces expected improvement across the local
decision boundary by a high-confidence upper bound on each child value.

To see the resulting score, suppose returns observed after selecting action
$a$ at node $s$ lie in $[0,1]$ and are conditionally independent with fixed
mean $Q(s,a)$.  Let $N(s)$ be the number of visits to $s$,
$N(s,a)$ the number selecting $a$, and $\widehat Q(s,a)$ the corresponding
sample mean.  Hoeffding's inequality gives
\begin{equation}
  \Prob\left(
    Q(s,a)>
    \widehat Q(s,a)+
    \sqrt{\frac{\log(1/\delta)}{2N(s,a)}}
  \right)
  \leq\delta.
  \label{eq:uct-hoeffding}
\end{equation}
Choosing a confidence schedule
$\delta_s=N(s)^{-\kappa}$ yields the optimistic value
\begin{equation}
  U_\kappa(s,a)
  =
  \widehat Q(s,a)
  +
  \sqrt{
    \frac{\kappa\log N(s)}
         {2N(s,a)}
  }.
  \label{eq:uct-confidence-score}
\end{equation}
Selecting a child maximizing $U_\kappa(s,a)$ at every node and recursively
descending until a leaf is reached is exactly the \UCT{} tree policy, up to
the conventional exploration constant
\cite{auer2002finite,kocsis2006uct}.

The complete approximation chain is therefore
\begin{equation}
\begin{split}
  \text{dynamic root VOC}
  &\;\longrightarrow\;
  \text{myopic root VOC}\\
  &\;\longrightarrow\;
  \text{local decision value}\\
  &\;\longrightarrow\;
  \text{upper confidence bound}\\
  &\;\longrightarrow\;
  \UCT.
  \label{eq:uct-approximation-chain}
\end{split}
\end{equation}

\subsection{Interpretation}

This derivation explains both the strength and the mismatch of \UCT.  The
confidence bonus is a tractable proxy for the possibility that an
undersampled action is better than its current estimate.  Recursive
localization makes the proxy inexpensive and naturally routes simulations
through the tree.  But it discards the root value gaps, the probability that
a local change propagates to the root, correlations among backed-up values,
and the option value of computations exposed below a leaf.

Moreover, the usual UCB motivation concerns cumulative regret, whereas the
\MCTS{} decision is commonly evaluated by terminal simple regret
\cite{tolpin2012mcts}.  Our derivation does not claim that \UCT{} exactly
maximizes root VOC.  It shows instead how \UCT{} results when a
decision-directed acquisition objective is localized and replaced by a
uniform confidence heuristic.  Equation~\eqref{eq:mcts-root-voc} supplies
the stronger target against which \UCT{} and alternative tree policies can
be compared.

\section{Weighted A* as Approximate Value of Computation}
\label{sec:weighted-astar}

The exact dynamic VOC of a node expansion accounts for all descendants it
may expose and every later adaptive choice.  Computing it would generally
require solving the search-control problem itself.  We therefore study a
one-step \emph{frontier-resolution approximation}: computation $c_n$ is
treated as if it revealed the best solution cost $X_n$ available through
frontier node $n$.  This is exact for a macro-computation that solves the
subproblem below $n$, and an approximation when $c_n$ is a single expansion.

\subsection{Expected Incumbent Improvement}

Assume first that an incumbent of cost $U=U(\History)$ is available.  After
resolving $n$, the terminal rule returns the cheaper of the incumbent and
the revealed solution through $n$.  For the loss in
\eqnref{eq:solution-cost-regret}, the reduction in loss is
\begin{equation}
  U-\min\{U,X_n\}=(U-X_n)_+,
  \label{eq:incumbent-improvement}
\end{equation}
where $(x)_+=\max\{x,0\}$.  The approximate myopic VOC is consequently
\begin{equation}
  \widehat{\VOC}_1(c_n\mid\History)
  =
  \E\bracks{(U-X_n)_+\given\History}.
  \label{eq:frontier-expected-improvement}
\end{equation}
This is the minimization analogue of expected improvement: a node is useful
only when it can produce a solution better than the incumbent, and its value
also reflects the possible magnitude of that improvement.

Equation~\eqref{eq:frontier-expected-improvement} is not an information-gain
criterion.  Learning with certainty that a branch is mediocre may be
informative, but it has no one-step value once that branch cannot improve
the terminal solution.

\subsection{A Location Model for Heuristic Error}

We now impose a deliberately simple posterior model.  Conditional on the
current history, suppose the total solution cost through every frontier node
has the form
\begin{equation}
  X_n
  =
  g(n)+w h(n)+\varepsilon_n,
  \qquad w>0,
  \label{eq:weighted-location-model}
\end{equation}
where the marginal distribution of $\varepsilon_n$ is the same for all
frontier nodes.  Independence among the errors is not required.  The
coefficient $w$ calibrates the heuristic to the posterior location of
remaining solution cost; common residual errors express the approximation
that nodes with the same $g+w h$ are equally promising.

\begin{theorem}[Weighted best-first ordering]
\label{thm:weighted-astar-voc}
Suppose \eqnref{eq:frontier-expected-improvement} and
\eqnref{eq:weighted-location-model} hold and all computations have equal
cost.  Selecting a frontier node of minimum
\begin{equation}
  f_w(n)=g(n)+w h(n)
  \label{eq:weighted-astar-score}
\end{equation}
maximizes approximate myopic VOC.  If expected improvement is strictly
decreasing at the attained frontier scores, the two orderings coincide up
to ties.
\end{theorem}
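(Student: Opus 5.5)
The plan is to reduce the approximate myopic VOC of each frontier node to a single function of its score $f_w(n)$, and then show that this function is nonincreasing. Substituting \eqnref{eq:weighted-location-model} into \eqnref{eq:frontier-expected-improvement} gives
\[
  \widehat{\VOC}_1(c_n\mid\History)
  =
  \E\bracks{\bigl(U-f_w(n)-\varepsilon_n\bigr)_+\given\History}.
\]
The key observation is that the integrand depends on $n$ only through the constant $f_w(n)$ and the error $\varepsilon_n$. Since the expectation of a function of one variable depends only on that variable's marginal law, and the marginal law of $\varepsilon_n$ is common to all frontier nodes, the expression equals $\phi\bigl(f_w(n)\bigr)$, where
\[
  \phi(s)=\E\bracks{(U-s-\varepsilon)_+},
\]
$\varepsilon$ has the common marginal distribution, and $U$ is fixed by $\History$. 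This is exactly why independence among the errors is not needed: each node's VOC is a marginal quantity, and joint dependence never enters a single node's score.

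Next I would show that $\phi$ is nonincreasing. The map $s\mapsto(U-s-e)_+$ is nonincreasing for every fixed $e$, so integrating over $e$ preserves monotonicity. If $\phi(s)$ is finite, it follows that any minimizer of $f_w$ over the frontier attains $\max_n\phi(f_w(n))$, so it maximizes approximate myopic VOC. Because all computation costs are equal, the net myopic VOC in \eqnref{eq:net-myopic-voc} shifts every node by the same constant $\lambda\CompCost$ and leaves the argmax unchanged. The same holds for the ratio heuristic.

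For the second claim, I would assume $\phi$ is strictly decreasing on the finite set of attained scores. Then $f_w(n)<f_w(m)$ holds if and only if $\phi(f_w(n))>\phi(f_w(m))$, and equal scores give equal values. Hence the ordering by increasing $f_w$ and the ordering by decreasing $\widehat{\VOC}_1$ are the same weak order, meaning they coincide up to ties.

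The main obstacle is the technical conditions rather than the monotonicity. Two points need care:
\begin{itemize}
\item Integrability: I need $\E[(U-s-\varepsilon)_+]<\infty$. This follows if $\E[\varepsilon_-]<\infty$, and it holds automatically when $X_n\ge0$, since then the integrand is bounded by $U$.
\item The no-incumbent case: when $U=+\infty$ (or a failure penalty is used), $\phi$ can be constant or infinite, so the first claim only gives "a minimizer is among the maximizers." This weak form is precisely why strictness is assumed separately for the second claim.
\end{itemize}
I would also note where strictness can fail. If $U-s$ lies below the support of $\varepsilon$, then $\phi(s)=0$ on an interval, and nodes that cannot improve the incumbent all tie at zero.
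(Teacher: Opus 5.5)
Your proposal is correct and follows the paper's proof essentially step for step: you substitute the location model to write the approximate VOC as $\phi_U(f_w(n))$ using the common marginal of $\varepsilon_n$, obtain monotonicity from the pointwise monotonicity of $(U-x-\varepsilon)_+$ in $x$, and use strict decrease for the converse up to ties. Your additional remarks on integrability, equal costs, and where strictness can fail go beyond what the paper spells out but do not change the argument.
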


\begin{proof}
Let $\varepsilon$ have the common marginal distribution of the
$\varepsilon_n$, and define
\begin{equation}
  \phi_U(x)
  =
  \E\bracks{(U-x-\varepsilon)_+}.
  \label{eq:expected-improvement-location}
\end{equation}
For every fixed value of $\varepsilon$, $(U-x-\varepsilon)_+$ is
nonincreasing in $x$; hence $\phi_U$ is nonincreasing.  By
\eqnref{eq:weighted-location-model},
\[
  \widehat{\VOC}_1(c_n\mid\History)
  =
  \phi_U\bigl(g(n)+w h(n)\bigr).
\]
Every minimizer of $f_w$ is therefore a maximizer of approximate VOC.
Strict decrease gives the converse, modulo equal scores.
\end{proof}

The result derives a family of acquisition rules rather than selecting one
universal weight.  At $w=1$, the score is $g(n)+h(n)$ and gives \Astar{}
ordering.  Finite $w>1$ gives weighted \Astar{} ordering
\cite{pohl1970heuristic}.  Finally, dividing the score by $w$ shows that
\begin{equation}
  \frac{f_w(n)}{w}
  =
  h(n)+\frac{g(n)}{w}
  \longrightarrow h(n),
  \label{eq:gbfs-limit}
\end{equation}
so \GBFS{} is the limiting ordering as $w\rightarrow\infty$.  This limit is
appropriate only when variation in cost-to-come is negligible relative to
heuristic progress, or when a satisficing terminal loss makes solution
discovery dominate solution cost.

\subsection{What the Derivation Does and Does Not Establish}

\paragraph{Posterior calibration.}
In \eqnref{eq:weighted-location-model}, $w$ is the relative calibration of
the heuristic and the exact path cost.  If
$C^\star(n)=h(n)+\varepsilon_n$, then $w=1$ and the VOC proxy recovers
\Astar.  If the heuristic systematically understates variation in remaining
cost, a larger coefficient is appropriate.  More generally, the model
suggests a history-dependent weight $w(\History)$ estimated from heuristic
error, rather than a fixed constant chosen independently of the posterior.

\paragraph{Terminal objective.}
The theorem concerns solution quality under a bounded budget, after an
incumbent has been found.  It does not derive the certification behavior of
exact \Astar.  Under admissibility, $g+h$ is both a posterior score in the
special location model and the deterministic lower bound used in
\eqnref{eq:astar-certificate}.  These are distinct reasons for the same
ordering.  For $w>1$, $g+w h$ is generally not a valid lower bound, so the
VOC derivation should not be read as an optimality certificate.

\paragraph{Approximation boundary.}
A single expansion rarely reveals $X_n$.  It instead produces a partial
observation and enables computations at its children.  Exact dynamic VOC
would propagate the value of those contingent descendants through
\eqnref{eq:fixed-budget-bellman}.  The frontier-resolution model collapses
that continuation value into a common residual distribution.  Likewise, if
expansion costs differ, the cost-sensitive one-step score becomes
\begin{equation}
  \phi_U\bigl(f_w(n)\bigr)
  -
  \lambda\CompCost(c_n,\History),
  \label{eq:cost-sensitive-weighted-score}
\end{equation}
which need not preserve weighted-\Astar{} ordering.

These qualifications are the point of the derivation.  Weighted \Astar{} is
not asserted to be the exact solution to every metalevel search problem.
Rather, it is the exact myopic policy produced by a transparent set of
approximations: incumbent-improvement loss, frontier resolution, a common
location family for heuristic error, and equal computation cost.  Relaxing
these assumptions yields principled departures from the classical score,
while retaining the same terminal value-of-computation objective.

\section{Related Work}
\label{sec:related-work}

\paragraph{Value of information and rational metareasoning.}
Statistical decision theory values an observation by the improvement in the
decision made after observing it \cite{howard1966information,berger1985decision}.
Rational metareasoning applies the same principle to internal computation:
computations become metalevel actions, and a bounded agent selects among them
using value of computation
\cite{russell1991right,russell1991principles}.  Subsequent work develops
optimal and approximate computation-selection policies and emphasizes the
difference between myopic value and the value of computations enabled later
\cite{hay2012selecting,tolpin2012semimyopic}.  Our framework is a
specialization of this tradition, not an alternative to it.  Its role is to
isolate terminally evaluated computation allocation, state its
domain-independent metalevel decision process, and characterize exactly when
information gain agrees with decision value.

\paragraph{Bandits and information-directed allocation.}
Classical \UCB{} methods optimize cumulative reward and hence attach utility
to actions taken throughout the run \cite{auer2002finite}.  Pure-exploration
bandits instead optimize a terminal recommendation
\cite{bubeck2011pure,russo2020bestarm}, placing them inside our scope.
Knowledge-gradient policies select measurements by expected improvement in
the final decision \cite{frazier2008knowledge}.  Information-theoretic
analyses of Thompson sampling and information-directed sampling connect
learning about an optimal action to regret
\cite{russo2016information,russo2018ids}.  Our results complement these
methods by separating mutual information from myopic simple-regret VOC:
the two coincide for log loss, but not for general terminal decision losses.

\paragraph{Monte Carlo tree search.}
\UCT{} imports a cumulative-regret bandit rule into tree search
\cite{kocsis2006uct,browne2012survey}.  Because an internal simulation is
normally valuable through the final root action, simple-regret and
value-of-information alternatives have been proposed
\cite{tolpin2012mcts}.  Static and dynamic values of computation make the
effect of future simulations explicit \cite{sezener2020mcts}.  We retain this
metareasoning interpretation while treating \MCTS{} as one instance of a
broader terminal computation-allocation problem.  The comparison with
bandits and graph search highlights a structural distinction: tree
computations both update existing beliefs and expose contingent
computations below the frontier.

\paragraph{Heuristic search.}
\Astar{} and generalized best-first search are classically analyzed through
lower bounds, admissibility, and node-expansion guarantees
\cite{hart1968astar,dechter1985astar}.  Early weighted heuristic search
studies how the relative reliance on path cost and estimated remaining cost
affects search efficiency \cite{pohl1970heuristic}.  Related metareasoning
work values whether an additional heuristic should be computed for a node
\cite{tolpin2013rationalastar}.  Our weighted-\Astar{} result asks a
different question: under which posterior and one-step approximations does
the value of expanding a frontier node itself become monotone in $g+w h$?
This interpretation supplements rather than replaces the worst-case and
certificate-based analyses of heuristic search.

\section{Limitations}
\label{sec:limitations}

The framework is normative: applying it requires a prior or predictive model
for unperformed computations.  In deterministic search this uncertainty is
epistemic, and a misspecified model can misrank computations.  Exact dynamic
VOC is itself generally intractable because it includes all adaptive
continuations.  Myopic VOC avoids that recursion but can assign zero value to
an expansion whose descendants enable useful later computations.

The information results are also asymmetric.  Mutual information is exact
only for particular scoring rules such as log loss.  For bounded
target-dependent loss, our bound shows that little information implies
little myopic VOC, but high information need not imply high decision value.
If value gaps are not determined by the chosen target, the target must be
enriched or the full environment retained.

The weighted-\Astar{} derivation has additional assumptions: an incumbent is
already available, resolving a frontier node is treated as revealing its
best completion, heuristic errors share a common location family, and
computation costs are equal.  These assumptions yield an exact ordering
within the approximation, not completeness, optimality, or bounded
suboptimality guarantees for arbitrary graph search.  Unequal expansion
costs, correlated observations with different marginals, dead-end
probabilities, or the option value of descendants can all produce acquisition
rules that are not expressible as $g+w h$.

Finally, the paper does not address cumulative-regret interaction, real-time
search in which execution changes the world during deliberation, or empirical
estimation of the proposed posterior quantities.  These cases require either
a broader object-level utility model or additional learning assumptions.

\section{Conclusion}
\label{sec:conclusion}

Search algorithms can be compared at a level beneath their familiar
priority formulas: each is a policy for choosing costly computations before
a terminal decision.  The appropriate normative quantity is reduction in
terminal decision loss, not information in isolation.  This distinction
explains both the usefulness and the limits of information-directed search.

The framework recovers bandit measurement, Monte Carlo simulation, and node
expansion as instances with different observation and availability
structures.  Its weighted-\Astar{} derivation further illustrates how a
classical priority rule can arise from explicit approximations to VOC, with
\Astar{} and \GBFS{} marking interpretable endpoints.  A natural next step is
to replace fixed acquisition formulas by learned, history-dependent
approximations of dynamic VOC while preserving the terminal objective that
the search is actually evaluated on.

\bibliography{references}

% Uncomment only if AAAI-27 requires the checklist inside the paper.
% \input{ReproducibilityChecklist}

\end{document}